%
%
%
%

\documentclass[runningheads,a4paper]{llncs}

\usepackage{amssymb}
\setcounter{tocdepth}{3}
\usepackage{graphicx}
\usepackage{tikz}

\usepackage{algorithm2e}

\usepackage{url}
\urldef{\mailsa}\path|{joseluis.balcazar,cristina.tirnauca}@unican.es|
\newcommand{\keywords}[1]{\par\addvspace\baselineskip
\noindent\keywordname\enspace\ignorespaces#1}

\def\LL{\hbox{$\cal L$}}
\def\st{\bigm|}
\def\uc{\hbox{\rm uc}}
\def\lc{\hbox{\rm lc}}
\def\LC{\hbox{\rm LC}}
\def\0{\emptyset}

\begin{document}

\mainmatter  

\title{Border Algorithms for Computing \\ Hasse Diagrams of Arbitrary Lattices\thanks{This work 
has been partially supported by project FORMALISM (TIN2007-66523)
 of Programa Nacional de Investigaci\'on, Ministerio de Ciencia
 e Innovaci\'on (MICINN), Spain, by the Juan de la Cierva contract JCI-2009-04626 of the same ministry, and by the Pascal-2 Network of the European Union.}}

\titlerunning{Border Algorithms for Computing Hasse Diagrams of Arbitrary Lattices}


\author{Jos\'e L Balc\'azar
\and Cristina T\^\i{}rn\u{a}uc\u{a}}
\authorrunning{Jos\'e L Balc\'azar, Cristina T\^\i{}rn\u{a}uc\u{a}}

\institute{Departamento de Matem\'aticas, Estad\'\i{}stica y Computaci\'on\\
Universidad de Cantabria\\
Santander, Spain\\
\mailsa\\
}

\toctitle{Border Algorithms for Computing Hasse Diagrams of Arbitrary Lattices}
\tocauthor{Jos\'e L Balc\'azar and Cristina T\^\i{}rn\u{a}uc\u{a}}
\maketitle

\begin{abstract}
The Border algorithm and the iPred algorithm find the
Hasse diagrams of FCA lattices. We show that they can
be generalized to arbitrary lattices. In the case of
iPred, this requires the identification of a join-semilattice
homomorphism into a distributive lattice.
\keywords{Lattices, Hasse diagrams, border algorithms}
\end{abstract}

\section{Introduction}
Lattices are mathematical structures with many applications in
computer science; among these, we are interested in fields like 
data mining, machine learning, or knowledge discovery in databases. 
One classical use of lattice theory is in formal concept analysis 
(FCA) \cite{GanWil99}, where the concept lattice with its diagram 
graph allows for the visualization and summarization of data in a
more concise representation. In the Data Mining community, the same
mathematical notions (often under additional ``frequency''
constraints that bound from below the size of the support set) 
are studied under the banner of Closed-Set Mining
(see~e.g.~\cite{ZakHsi05}).

In these applications, 
data consists of \emph{transactions}, also called \emph{objects},
each of which, besides having received a unique identifier, consists
of a set of \emph{items} or \emph{attributes} taken from a previously
agreed finite set. 
A concept is a pair formed by a set of transactions ---the \emph{extent} 
set or \emph{support set} of the concept--- and a set of attributes 
---the \emph{intent} set of the concept---  defined as the set of all
those attributes that are shared by  all the transactions present in 
the extent. Some data analysis processes are based on the family of
all intents (the ``closures'' stemming from the dataset); but others
require to determine also their order relation, which is 
a finite lattice, in the form of a line graph (the \emph{Hasse diagram}). 

Existing algorithms can be
divided into three main types: the ones that only generate the set of
concepts, the ones that first generate the set of concepts and then
construct the Hasse diagram, and the ones that construct the diagram
while computing the lattice elements (see \cite{ZakHsi05}, and 
also \cite{GoMiAl95,KuzObi01} and
the references therein). The goal is to obtain the concept lattice in linear time in the number of concepts because this number is, most of the times, already exponential in the number of attributes, making the task of getting polynomial algorithms in the size of the input rather impossible.

One widespread use of concepts or closures is the generation of
implications or of partial implications (also called association rules).
Several data mining algorithms aim at processing large
datasets in time linear in the size of the closure space,
and explore closed sets individually; these solutions tend
to drown the user under a deluge of partial implications. More 
sophisticated works attempt at providing selected ``bases'' of
partial implications; the early proposal in \cite{Luxe91} requires
to compute immediate predecessors, that is, the Hasse diagram. 
Alternative proposals such as the
Essential Rules of \cite{AggYu01b} or the equivalent Representative 
Rules of \cite{Krys98} (of which a detailed discussion with new
characterizations and an alternative basis proposal appears in
\cite{Balc10c}) require to process predecessors of closed sets 
obeying tightly certain support inequalities; these algorithms
also benefit from the Hasse diagram, as the slow alternatives are
blind repeated traversal of the closed sets in time quadratic 
in the size of the closure space,
or storage of all predecessors of each closed set, which soon
becomes large enough to impose a considerable penalty on the
running times.


The problem of constructing the Hasse diagram of an
arbitrary finite lattice is less studied. One algorithm that has a better  
worst case complexity than various previous works is described in \cite{NouRay99}.  
From our ``arbitrary lattices'' perspective,
its main drawback is that it requires the availability of a
\emph{basis} from which each element of the lattice can be derived. In
the absence of such a subset, one may still use this 
algorithm (at a greater computational cost) to output the
Dedekind-MacNeille completion \cite{DavPri91} of the given lattice,
which in our case is isomorphic to the lattice itself. The algorithm
is also easily adaptable to concept lattices, where indeed a basis is
available immediately from the dataset transactions.

We consider of interest to have available further, faster algorithms
for arbitrary finite lattices; we have two reasons for this aim.
First, many (although not all) algorithms constructing Hasse
diagrams traverse concepts in layers defined by the size of
the intents; our explorations about association rules sometimes
require to follow different orderings, so that a more abstract
approach is helpful; second, we keep in mind the application area
corresponding to certain variants of implications and database
dependencies that are characterized by lattices of equivalence
relations, so that we are interested in laying a strong
foundation that gives us a clear picture of the applicability
requirements for each algorithm constructing Hasse diagrams
in lattices other than powerset sublattices.

Of course, we expect that FCA-oriented
algorithms could be a good source of inspiration for the design
of algorithms applicable in the general case. An example that such 
an extension can be done is the algorithm in \cite{VaMiLe00} 
(see Section \ref{s:Border} for more details), whose highest-level 
description matches the general case of arbitrary lattices; 
nevertheless, the actual implementation described in \cite{VaMiLe00} 
works strictly for formal concept lattices, so that further
implementations and complexity analyses are not readily available 
for arbitrary finite lattices.

The contribution of the present paper supports the same idea: we show how
two existing algorithms that build the Hasse diagrams of a concept
lattice can be adapted to work for arbitrary lattices. Both algorithms
have in common the notion of \emph{border}, which we (re-)define and
formalize in Section \ref{s:Border}, after presenting some preliminary
notions about lattice theory in Section \ref{s:Prel}; our approach
has the specific interest that the notion of border is given just
in terms of the ordering relation, and not in terms of a set of
elements already processed as in previous references 
(\cite{BSVG09,MarEkl08,VaMiLe00}); yet, the notions are equivalent.
We state and prove properties of borders and describe the
\emph{Generalized Border Algorithm};
whereas the algorithm reads, in high level, exactly
as in previous references, its validation is new, as
previous ones depended on the lattice being an FCA lattice.
In Section 
\ref{s:iPA} 
we introduce the
\emph{Generalized iPred Algorithm}, exporting the iPred
algorithm of FCA lattices \cite{BSVG09} to arbitrary lattices, 
after arguing its correctness.
This task
is far from trivial and is our major contribution, since the
existing 
 rendering and validation 
 of the iPred algorithm 
relies 
again 
extensively on the fact that it is being applied to an FCA lattice,
and even performs operations on difference sets that may not
belong to the closure space.
Concluding remarks and future work ideas are presented in Section \ref{s:Conc}.

\section{Preliminaries}\label{s:Prel}

We develop all our work in terms of lattices and
semilattices; see \cite{DavPri91} as main source. 
All our structures are \emph{finite}.
A \emph{lattice} is a partially ordered set in which
every nonempty subset has a meet (greatest lower bound) and
a join (lowest upper bound). If only one of these 
two operations is guaranteed to be available a priori, 
we speak of a \emph{join-semilattice} or a 
\emph{meet-semilattice} as convenient. 
Top and bottom elements
are denoted $\top$ and $\bot$, respectively.
Lower case letters, possibly with primes, and taken
usually from the end of the latin alphabet denote
lattice elements: $x$, $y'$. Note that Galois 
connections are not explicitly present in this paper, 
so that the ``prime'' notation does not refer to the
operations of Galois connections.

Finite semilattices can be extended into lattices
by addition of at most one further element \cite{DavPri91}; for instance,
if $(\LL,\leq,\lor)$ is a join-semilattice with bottom 
element $\bot$, one can define a meet operation as follows:
$\bigwedge X = \bigvee \{y \st \hbox{$\forall x \in X,$} y \leq x\}$;
the element $\bot$ ensures that this set is nonempty.
Thus, if the join-semilattice lacks a bottom element,
it suffices to add an ``artificial'' one to obtain a lattice.
A dual process is obviously possible in meet-semilattices.

Given two join-semilattices $(S,\lor)$ and $(T, \lor)$, 
a \emph{homomorphism} 
is a 
function $f: S \rightarrow T$ such that
$f(x \lor y) = f(x) \lor f(y)$. Hence $f$ is just 
a homomorphism of the two semigroups associated 
with the two semilattices. If $S$ and $T$ both include 
a bottom element $\bot$, then $f$ should also be a 
monoid homomorphism, i.e. we additionally require 
that $f(\bot) = \bot$. 
Homomorphisms of meet-semilattices 
and of lattices are defined similarly.
It is easy to check that $x\leq y \Rightarrow f(x)\leq f(y)$
for any homomorphism $f$;
the converse implication, thus the
equivalence
$x\leq y \Leftrightarrow f(x)\leq f(y)$, 
is also true for injective~$f$ but not guaranteed in general.

We must point out here a simple but crucial fact that 
plays a role in our later developments: given a homomorphism $f$
between two join-semilattices $S$ and $T$, if we extend both
into lattices as just indicated, then $f$ is \emph{not} necessarily 
a lattice homomorphism; for instance, there could be elements of $T$
that do not belong to the image set of $f$, and they may become meets
of subsets of $T$ in a way that prevents them to be the image of the 
corresponding meet of $S$. For one specific example, see
Figure~\ref{fg:lattices}: 
consider the two join-semilattices defined by the solid lines,
where the numbering defines an injective homomorphism from the 
join-semilattice
in (a) to the join-semilattice in (b). Both lack a bottom element. Upon
adding it, as indicated by the broken lines, in lattice (a) the
meets of 1 and 2 and of 1 and~3 coincide, but the meets of their
corresponding images in (b) do not; for this reason, the 
homomorphism cannot be extended to the whole lattices.

\begin{figure}
\begin{center}
\begin{tikzpicture}
\draw (2,0) -- (0,2) -- (-2,0)   (0,2) -- (0,0); 
\draw [dashed] (2,0) -- (0,-2) -- (-2,0)   (0,-2) -- (0,0); 
\draw (0,2) node [fill=white] {$\top$}; 
\draw (-2,0) node [fill=white] {1};
\draw (0,0) node [fill=white] {2};
\draw (2,0) node [fill=white] {3}; 
\draw (0,-2) node [fill=white] {$\bot$};
\draw (0,-3) node {(a)};
\draw (6,2) -- (4,0); 
\draw (6,2) -- (6,0); 
\draw (6,2) node [fill=white] {$\top$}  -- (8,0); 
\draw (4,0) node [fill=white] {1} -- (5,-1);
\draw (6,0) node [fill=white] {2} -- (5,-1);
\draw [style=dashed] (5,-1) node [fill=white] {4} -- (6,-2);
\draw [style=dashed] (8,0) node [fill=white] {3} -- (6,-2) node [fill=white] {$\bot$};
\draw (6,-3) node {(b)};
\end{tikzpicture}
\caption{Two join-semilattices converted into lattices}
\label{fg:lattices}
\end{center}
\end{figure}
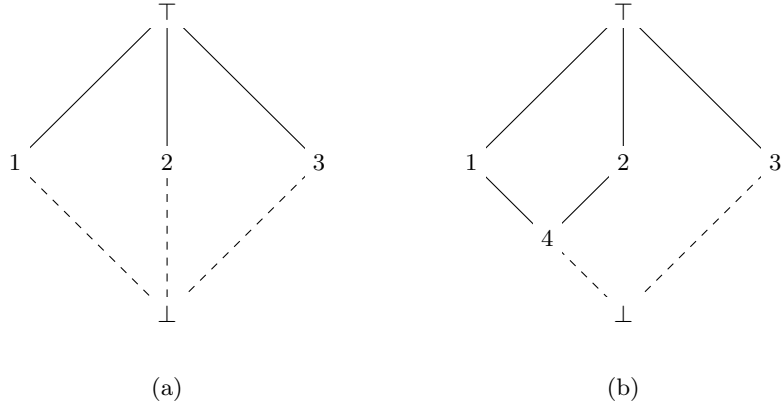

However, the following does hold:

\begin{lemma}
\label{l:joinmorph}
Consider two join-semilattices $S$ and $T$,
and let $f :S \to T$ be a 
homomorphism.
 After extending both semilattices
into lattices, 
$f(\bigwedge Y) \leq \bigwedge f(Y)$ for all $Y\subseteq S$.
\end{lemma}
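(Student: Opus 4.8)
The plan is to argue directly from the definition of meet in the extended lattices together with the monotonicity of $f$. Recall from the preliminaries that the meet in the lattice obtained from a join-semilattice is computed as $\bigwedge Y = \bigvee \{z \st \forall y\in Y, z\leq y\}$, the join of all common lower bounds (the artificial bottom guaranteeing that this set is nonempty). So I would fix an arbitrary $Y\subseteq S$ and write $m = \bigwedge Y$ taken in the extended lattice of $S$.

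**The key steps.**

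First I would observe that $m$ is, in particular, a lower bound of $Y$ in $S$'s extension: $m\leq y$ for every $y\in Y$. Applying monotonicity of $f$ (which, as noted in the excerpt, holds for any homomorphism, and extends to the added bottom since we required $f(\bot)=\bot$), I get $f(m)\leq f(y)$ for every $y\in Y$. Hence $f(m)$ is a lower bound in $T$'s extension of the set $f(Y) = \{f(y) \st y\in Y\}$. Second, I would invoke the defining property of $\bigwedge f(Y)$ in the extended lattice of $T$: it is the \emph{greatest} lower bound of $f(Y)$, so any lower bound of $f(Y)$ — in particular $f(m)$ — lies below it. Therefore $f(m)\leq \bigwedge f(Y)$, which is exactly $f(\bigwedge Y)\leq \bigwedge f(Y)$.

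**Where the care is needed.**

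The statement is almost a triviality for genuine meet-preserving maps, so the only subtlety — and the reason the lemma is phrased the way it is — is that we are working in the \emph{extended} lattices, where $f$ need not be a lattice homomorphism at all (as the paragraph around Figure~\ref{fg:lattices} stresses). So the one point I would be careful about is not to use any meet-preservation: the argument must go purely through "$f$ monotone" plus "$\bigwedge$ is the greatest lower bound". A second small point worth checking explicitly is the degenerate case $Y=\0$, where $\bigwedge\0 = \top$ in both lattices; here the claim reduces to $f(\top)\leq\top$, which is immediate. I expect the main (very mild) obstacle to be simply making sure the monotonicity of $f$ is legitimately available on the added bottom element, which is handled by the monoid-homomorphism requirement $f(\bot)=\bot$ stated in the preliminaries; no genuine difficulty arises beyond that.
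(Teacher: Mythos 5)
Your argument is correct and is essentially the paper's own proof: both derive $f(\bigwedge Y)\leq f(y)$ for every $y\in Y$ from monotonicity of $f$ and then conclude by the greatest-lower-bound property of $\bigwedge f(Y)$. Your extra remarks on the empty-set case and on the added bottom element are sensible but do not change the route.
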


This is immediate to see by considering that
$\bigwedge Y \leq y$ for all $y\in Y$, hence
$f(\bigwedge Y) \leq f(y)$ for all such $y$, 
and the claimed inequality follows.

We employ $x<y$ as the usual shorthand: 
$x\leq y$ and $x\neq y$.
We denote as $x\prec y$ the fact that $x$ is an
immediate predecessor of $y$ in $\LL$, that is,
$x < y$ and, for all $z$, 
$x < z \leq y$ implies $z=y$
(equivalently, $x\leq z < y$ implies $x=z$).

We focus on algorithms that have access to an
underlying finite lattice $\LL$ of size~$|\LL|=n$,
with ordering denoted $\leq$; abusing language 
slightly, we denote by $\LL$ as well its carrier set.
The \emph{width} $w(\LL)$ of the lattice $\LL$ 
is the maximum size of an antichain
(a subset of $\LL$ formed by pairwise incomparable elements).
The lattice is assumed to be available for our
algorithms in the form of an abstract data type 
offering an iterator that traverses all the elements of the 
carrier set, together with the operations of testing 
for the ordering 
(given $x$, $y\in\LL$, find out whether $x\leq y$) 
and computing the meet $x\land y$ and join $x\lor y$ 
of $x$, $y\in\LL$; also the constants $\top\in\LL$ and $\bot\in\LL$
are assumed available. 


The algorithms we consider are to perform the
task of constructing explicitly the Hasse diagram
(also known as the reflexive and transitive reduction)
of the given lattice: $H(\LL)=\{(x,y)\st x\prec y\}$.
By projecting the Hasse diagram along the first or
the second component we find our crucial ingredients:
the well-known upper and lower covers.

\begin{definition}
\label{def:covers}
The upper cover of $x\in\LL$ is 
$\uc(x) = \{ y \st x\prec y \}$. 
The lower cover of $y\in\LL$ is 
$\lc(y) = \{ x \st x\prec y \}$.
\end{definition}

The following immediate fact is stated separately
just for purposes of easy later reference:

\begin{proposition}
\label{p:covers}
If $x<y$ then there is $z\in\uc(x)$
such that $x\prec z\leq y$; and there is $z'\in\lc(y)$
such that $x\leq z'\prec y$.
\end{proposition}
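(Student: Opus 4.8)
The plan is to derive both halves of the statement directly from the finiteness of $\LL$ by an extremal-element argument; no use of the meet or join operations is needed, only the ordering $\leq$.

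For the first claim, I would consider the set $A=\{\, z\in\LL \st x<z\leq y\,\}$. It is nonempty, since $x<y$ puts $y$ itself into $A$, and because $\LL$ is finite $A$ has a minimal element $z$. I then claim $x\prec z$: if some $w$ satisfied $x<w\leq z$, then from $w\leq z\leq y$ we get $w\leq y$, so $w\in A$ with $w\leq z$, and minimality of $z$ forces $w=z$. Hence $z\in\uc(x)$ by Definition~\ref{def:covers}, and $z\leq y$ holds by construction, which is exactly the first assertion.

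For the second claim, I would run the dual argument on $B=\{\, z'\in\LL \st x\leq z'<y\,\}$, which is nonempty because $x\in B$, and pick a maximal element $z'$ of $B$. If $z'\leq w<y$ for some $w$, then $x\leq z'\leq w$ gives $x\leq w$, so $w\in B$; maximality of $z'$ then yields $w=z'$, so $z'\prec y$, i.e. $z'\in\lc(y)$, while $x\leq z'$ holds by construction.

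There is essentially no hard step here: the only point to watch is reading off the definition of $\prec$ correctly, in particular handling the non-strict inequality $x<w\leq z$ (rather than a strict $x<w<z$) when verifying that the chosen witness is indeed an immediate predecessor or successor; and noting that finiteness — assumed throughout the paper — is precisely what guarantees that the required minimal/maximal elements exist. A slightly more uniform alternative would be to take any maximal chain inside the interval between $x$ and $y$ and let $z$ (respectively $z'$) be the element of that chain immediately above $x$ (respectively immediately below $y$); maximality of the chain delivers the covering relations at once.
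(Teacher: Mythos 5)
Your proof is correct. The paper in fact gives no argument for this proposition at all --- it is stated as an ``immediate fact'' recorded only for later reference --- and your extremal argument (take a minimal element of $\{\,z \st x<z\leq y\,\}$, respectively a maximal element of $\{\,z' \st x\leq z'< y\,\}$, using the standing finiteness assumption on $\LL$) is precisely the standard justification the authors leave implicit, with the covering conditions checked correctly against the paper's definition of $\prec$.
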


We will use as well yet another easy technicality:

\begin{lemma}
\label{l:twoprecs}
If $x_1 \prec y$ and $x_2 \prec y$, with $x_1\neq x_2$
then $x_1 \lor x_2 = y$.
\end{lemma}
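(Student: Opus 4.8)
The plan is to prove Lemma~\ref{l:twoprecs} directly from the definition of immediate predecessor, using the join and the covering relation. Since $x_1 \prec y$ and $x_2 \prec y$, both $x_1 \leq y$ and $x_2 \leq y$, so $y$ is an upper bound of $\{x_1, x_2\}$ and hence $x_1 \lor x_2 \leq y$. It remains to show that $x_1 \lor x_2$ cannot be strictly below $y$, and here is where the hypothesis $x_1 \neq x_2$ enters.

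First I would suppose, for contradiction, that $x_1 \lor x_2 < y$. Now $x_1 \leq x_1 \lor x_2$, and since $x_1 \prec y$ means nothing lies strictly between $x_1$ and $y$, the chain $x_1 \leq x_1 \lor x_2 < y$ forces $x_1 = x_1 \lor x_2$ (using the equivalent formulation $x_1 \leq z < y \Rightarrow x_1 = z$ given with the definition of $\prec$). By the symmetric argument with $x_2$ in place of $x_1$, we also get $x_2 = x_1 \lor x_2$. Combining these two equalities yields $x_1 = x_2$, contradicting the assumption. Hence $x_1 \lor x_2 = y$.

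There is essentially no obstacle here: the only thing to be careful about is invoking the ``squeezing'' property of $\prec$ in the correct direction (the version $x \leq z < y \Rightarrow x = z$ rather than $x < z \leq y \Rightarrow z = y$), and noting that $x_1 \lor x_2$ is, by definition of join, an element $z$ of $\LL$ satisfying $x_1 \leq z$ (and $x_2 \leq z$), so the property applies to it. The role of $x_1 \neq x_2$ is exactly to rule out the degenerate situation in which $x_1 \lor x_2$ equals both, which would otherwise be consistent with it being strictly below $y$.
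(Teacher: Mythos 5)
Your proof is correct and takes essentially the same route as the paper's: both rest on $x_1 \lor x_2 \leq y$ together with the covering property of $\prec$ applied to the sandwiched element $x_1 \lor x_2$. The only cosmetic difference is that the paper first deduces that $x_1$ and $x_2$ are incomparable, so that $x_1 < x_1 \lor x_2 \leq y$ and one application of $x_1 \prec y$ suffices, whereas you argue by contradiction and apply the squeezing property symmetrically to both $x_1$ and $x_2$ to force $x_1 = x_2$; both versions are equally valid.
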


\begin{proof}
Since $y \geq x_1$ and $y \geq x_2$ we have $y \geq x_1 \lor x_2$. 
Then, $x_1\neq x_2$ implies that they are mutually incomparable,
since otherwise the smallest is not an immediate predecessor of $y$;
this implies that $y \geq x_1 \lor x_2 > x_1$, whence $y = x_1 \lor x_2$
as $x_1\prec y$.\qed
\end{proof}

\section{The Border Algorithm in Lattices}\label{s:Border}

The algorithms we are considering here have in common
the fact that they traverse the lattice and explicitly 
maintain a subset of the elements seen so far: those 
that still might be used to identify new Hasse edges.
This subset is known as the ``border'' and, as it
evolves during the traversal, actually each element 
$x\in\LL$ ``gets its own border'' associated as the 
algorithm reaches it. The border associated to an
element may be potentially used to construct new
edges touching it (although these edges may not 
touch the border elements themselves): more precisely,
operations on the border for $x$ will result in $\uc(x)$, hence in the Hasse edges of the form $(x,z)$.

In previous references the border is defined in terms
of the elements already processed, and its properties
are mixed with those of the algorithm that uses it.
Instead,
we study axiomatically the properties of the notion of
``border'' on itself, always as a function of the element
for which the border will be considered as a source
of Hasse edges, in a manner that is independent of
the fact that one is traversing the lattice. 
This allows us to clarify which abstract
properties are necessary for border-based algorithms,
so that we can generalize them to arbitrary lattices,
traversed in flexible ways. 
Our key definition is, therefore:

\begin{definition}
\label{def:border}
Given $x\in\LL$ and $B\subseteq\LL$, $B$ is a 
\emph{border for} $x$ if the following properties hold:
\begin{enumerate}
\item
$\forall y\in B \, (y \not\leq x)$;
\item
$\forall z \, (x\prec z \Rightarrow \exists y \in B \, (y\leq z))$.
\end{enumerate}
\end{definition}

That is, $x$ is never above an element of a border,
but each upper cover of $x$ is; this last condition
is equivalent to: all elements strictly above $x$ 
are greater than or equal to some element of the border. 
Since $x\leq (x\lor y)$ always
holds and $x = (x\lor y)$ if and only if $y\leq x$, we get:

\begin{lemma}
\label{l:easy}
Let $B$ be a border for $x$. Then
$\forall y \in B \, (x < x\lor y)$.
\end{lemma}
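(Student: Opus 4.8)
The statement to prove is: if $B$ is a border for $x$, then $\forall y \in B\,(x < x \lor y)$. The plan is to start from an arbitrary $y \in B$ and combine the two standard facts about joins that the text has already flagged immediately before the lemma: first, $x \leq x \lor y$ holds unconditionally in any join-semilattice (it is just the definition of join as an upper bound); second, $x = x \lor y$ holds if and only if $y \leq x$. So to upgrade $x \leq x \lor y$ to the strict inequality $x < x \lor y$, it suffices to rule out $x = x \lor y$, which by the second fact is the same as ruling out $y \leq x$.

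The step that does the real work is invoking condition~(1) of Definition~\ref{def:border}: since $B$ is a border for $x$, every element $y$ of $B$ satisfies $y \not\leq x$. Therefore $x \neq x \lor y$, and together with $x \leq x \lor y$ this yields $x < x \lor y$, as required. Condition~(2) of the border definition plays no role here; only the ``$x$ is never above an element of a border'' half is needed.

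There is essentially no obstacle: the argument is a two-line chain of implications, and both ingredients ($x \leq x \lor y$ always, and $x = x\lor y \Leftrightarrow y \leq x$) are explicitly recalled in the sentence preceding the lemma, so they may be used without further justification. The only thing to be careful about is the direction of the biconditional — one needs $y \not\leq x \Rightarrow x \neq x\lor y$, which is the contrapositive of the ``if'' direction ($y \leq x \Rightarrow x = x \lor y$) and hence is exactly what the quoted equivalence provides.
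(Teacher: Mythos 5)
Your proof is correct and is essentially the paper's own argument: the paper derives the lemma in one line from exactly the two facts you cite ($x \leq x\lor y$ always, and $x = x\lor y \Leftrightarrow y \leq x$) combined with condition~(1) of Definition~\ref{def:border}. One small labeling slip: the implication you need, $y \not\leq x \Rightarrow x \neq x\lor y$, is the contrapositive of the ``only if'' half ($x = x\lor y \Rightarrow y \leq x$), not of the ``if'' half, but since the full equivalence is available this does not affect the validity of the argument.
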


All our borders will fulfill an extra ``antichain''
condition; the only use to be made of this fact is
to bound the size of every border by the width of
the lattice.

\begin{definition}
\label{def:properborder}
A border $B$ is \emph{proper} if every two different 
elements of $B$ are mutually incomparable.
\end{definition}

The key property of borders, that shows how to 
extract Hasse edges from them, is the following:

\begin{theorem}
\label{th:useborder}
Let $B$ be a border for $x_0$.
For all $x_1$ with $x_0 < x_1$,
the following are equivalent:
\begin{enumerate}
\item
$x_1\in\uc(x_0)$ (that is, $x_0\prec x_1$);
\item
there is $y\in B$ such that $x_1 = (x_0 \lor y)$
and, for all $z\in B$, if $(x_0\lor z)\leq(x_0\lor y)$ then
$(x_0\lor z) = (x_0\lor y)$.
\end{enumerate}
\end{theorem}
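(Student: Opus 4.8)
The plan is to prove the two implications separately, using Theorem~\ref{th:useborder}'s hypothesis that $B$ is a border for $x_0$ together with the elementary facts recorded in Lemma~\ref{l:easy} and Proposition~\ref{p:covers}.

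\medskip

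\emph{From (1) to (2).} Suppose $x_0 \prec x_1$. Since $x_0 < x_1$, by the defining property~2 of a border for $x_0$ there is $y \in B$ with $y \leq x_1$. Then $x_0 \lor y \leq x_1$ because both $x_0 \leq x_1$ and $y \leq x_1$. On the other hand Lemma~\ref{l:easy} gives $x_0 < x_0 \lor y$, so $x_0 < x_0 \lor y \leq x_1$; since $x_0 \prec x_1$ this forces $x_0 \lor y = x_1$, establishing the first half of~(2). For the minimality half, take any $z \in B$ with $(x_0 \lor z) \leq (x_0 \lor y) = x_1$. Again by Lemma~\ref{l:easy}, $x_0 < x_0 \lor z$, so $x_0 < x_0 \lor z \leq x_1$, and once more the covering relation $x_0 \prec x_1$ forces $x_0 \lor z = x_1 = x_0 \lor y$. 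Thus the full statement~(2) holds.

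\medskip

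\emph{From (2) to (1).} Suppose $y \in B$ witnesses~(2); in particular $x_1 = x_0 \lor y$, and by Lemma~\ref{l:easy} we already know $x_0 < x_1$. We must show there is no $z'$ strictly between, i.e.\ no $z'$ with $x_0 < z' < x_1$. Suppose for contradiction such a $z'$ exists. Since $x_0 < z'$ we may apply Proposition~\ref{p:covers} to get $w \in \uc(x_0)$ with $x_0 \prec w \leq z' < x_1$. Now $w \in \uc(x_0)$, so by property~2 of the border there is $z \in B$ with $z \leq w$; hence $x_0 \lor z \leq x_0 \lor w = w$ (using $x_0 \prec w$, so $x_0 \leq w$), and therefore $x_0 \lor z \leq w < x_1 = x_0 \lor y$. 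But then $(x_0 \lor z) \leq (x_0 \lor y)$ with $(x_0 \lor z) \neq (x_0 \lor y)$ (since $x_0 \lor z \leq w < x_1$), contradicting the minimality clause of~(2). Hence no such $z'$ exists and $x_0 \prec x_1$.

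\medskip

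\emph{Main obstacle.} The delicate direction is (2)$\Rightarrow$(1): the hypothesis only controls \emph{joins of border elements with }$x_0$, not arbitrary elements lying above $x_0$, so one cannot directly feed a hypothetical intermediate $z'$ into the minimality clause. The key maneuver is to pass from $z'$ down to an upper cover $w$ of $x_0$ via Proposition~\ref{p:covers}, and then from $w$ back to a border element $z$ via property~2 of the border; only after this two-step descent does the minimality clause apply and yield the contradiction. Care is needed to check the (strict) inequality $x_0 \lor z \neq x_0 \lor y$, which follows because $x_0 \lor z \leq w$ and $w < x_1$.
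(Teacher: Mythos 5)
Your proof is correct and follows essentially the same route as the paper: the direction (1)$\Rightarrow$(2) is identical, and in (2)$\Rightarrow$(1) you use the same two-step descent (Proposition~\ref{p:covers} to an upper cover of $x_0$, then border property~2 to a border element below it) before invoking the minimality clause. The only difference is presentational: the paper argues directly, applying Proposition~\ref{p:covers} to $x_0 < x_1$ itself and concluding that the resulting cover $z_0$ equals $x_1$, whereas you argue by contradiction through a hypothetical intermediate element $z'$; the lemmas and inequalities involved are the same.
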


\begin{proof}
Given $x_0\prec x_1$, we can apply the second condition
in the definition of border for $x_0$: 
$\exists y \in B \, (y\leq x_1)$. 
Using Lemma~\ref{l:easy}, $x_0 < (x_0 \lor y) \leq x_1$,
implying $(x_0 \lor y) = x_1$ since $x_0\prec x_1$.
Additionally, assuming $(x_0\lor z)\leq(x_0\lor y)$ 
for some $z\in B$ leads likewise to
$x_0 < (x_0\lor z) \leq (x_0\lor y) = x_1$ and the
same property applies to obtain
$(x_0\lor z) = (x_0\lor y) = x_1$. 

Conversely, again Lemma~\ref{l:easy} gives 
$x_0 < (x_0 \lor y) = x_1$. By Proposition~\ref{p:covers},
there is $z_0\in\uc(x_0)$ with 
$x_0 \prec z_0 \leq (x_0 \lor y) = x_1$. 
We apply the second condition of borders
to $x_0\prec z_0$ to obtain $z_1 \in B$ with $z_1\leq z_0$,
whence $(x_0 \lor z_1) \leq z_0 \leq (x_0 \lor y) = x_1$,
allowing us to apply the hypothesis of this direction:
$(x_0 \lor z_1) \leq (x_0 \lor y)$ with $z_1 \in B$ implies
$(x_0 \lor z_1) = (x_0 \lor y)$ and, therefore,  
$(x_0 \lor z_1) = z_0 = (x_0 \lor y) = x_1$.
That is, $x_1 = z_0 \in\uc(x_0)$.\qed
\end{proof}

Therefore, given an arbitrary element $x_0$ of the lattice, any candidate for being an element of its upper cover has to be obtainable as a join between $x_0$ and a border element ($x_1 = x_0 \lor y$ for some $y \in B$). Moreover, among these candidates, only those that are minimals represent immediate successors:
they come from those $y$ where $(x_0\lor z)\leq(x_0\lor y)$ 
implies $(x_0\lor z) = (x_0\lor y)$, for all $z \in B$.

\subsection{Advancing Borders}\label{ss:Border}

There is a naturally intuitive operation on borders;
if we have a border $B$ for~$x$, and we use it to
compute the upper cover of $x$, then we do not need
 $B$ as such anymore; to update it, seeing that we 
no longer need to forbid the membership of $x$, it
is natural to consider adding $x$ to the border.
If we had a proper border, and we wished to preserve 
the antichain property, the elements to be removed 
would be exactly the upper cover just computed, as these 
are, as we argue below, the only elements comparable 
to $x$ that could be in a proper border. (All elements 
other than $x$ are mutually incomparable, as the 
border was proper to start with.)

\begin{definition}
Given $x\in\LL$ and a border $B$ for $x$, the
standard step for $B$ and $x$ is $B\cup\{x\}-\uc(x)$.
\end{definition}

 Note that this is {\em not} to say that $\uc(x)\subseteq B$;
 elements of $\uc(x)$ may or may not appear in $B$.
 We will apply the standard step always when $B$
 is a border for $x$, but let us point out that
 the definition would be also valid without this 
 constraint, as it consists of just some set-theoretic
 operations.

\begin{proposition}
Let $B$ be a proper border for $x$.
Then the standard step for $B$ and $x$ 
is also an antichain.
\end{proposition}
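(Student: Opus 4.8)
The plan is to check the antichain condition directly for the two kinds of pairs that can occur in the set $B' := (B\cup\{x\})\setminus\uc(x)$ produced by the standard step: pairs of elements both inherited from $B$, and pairs consisting of $x$ together with a surviving element of $B$. Observe first that $x$ itself does survive, since $x\not<x$ gives $x\notin\uc(x)$, so $x\in B'$; thus these really are the only two cases. For the first kind there is nothing to prove: two distinct elements of $B'$ that both lie in $B$ are mutually incomparable simply because $B$ was assumed to be a proper border.

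The substance of the argument is the second kind, namely that $x$ is incomparable with every $y\in B\setminus\uc(x)$. One direction is immediate from the first defining property of a border: $y\not\leq x$ holds for every $y\in B$ (in particular $y\neq x$). For the other direction I would argue by contradiction: suppose $x<y$ for some $y\in B\setminus\uc(x)$. By Proposition~\ref{p:covers} there is $z\in\uc(x)$ with $x\prec z\leq y$. Now apply the second defining property of a border to $x\prec z$ to obtain some $y'\in B$ with $y'\leq z$, hence $y'\leq z\leq y$ with both $y',y\in B$. Since $B$ is proper, any two comparable elements of $B$ must coincide, so $y'=y$; this squeezes $y=z$, and therefore $y=z\in\uc(x)$, contradicting the choice $y\notin\uc(x)$. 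Hence $x\not<y$, and together with $y\not\leq x$ this yields that $x$ and $y$ are incomparable, as required.

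Combining the two cases, every pair of distinct elements of $B'$ is incomparable, i.e. the standard step for $B$ and $x$ is an antichain. The only mildly delicate step is the ``pull-back'' in the contradiction argument: using Proposition~\ref{p:covers} to descend from $y$ to an immediate cover $z$ of $x$, and then the second border axiom to bring $z$ back into relation with an element of $B$, so that properness of $B$ can be invoked; the rest is routine manipulation of the definitions. (Incidentally, this also substantiates the informal remark preceding the statement that the elements of $\uc(x)$ are precisely the elements of a proper border $B$ that are comparable to $x$, so that removing them and adding $x$ is exactly what is needed to preserve the antichain property.)
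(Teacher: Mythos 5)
Your proof is correct and follows essentially the same route as the paper's: the surviving elements of $B$ are pairwise incomparable by properness, $y\not\leq x$ comes from the first border axiom, and the case $x<y$ is excluded by descending to a cover $z\in\uc(x)$ via Proposition~\ref{p:covers}, invoking the second border axiom to get an element of $B$ below $z$, and using properness to force $y=z\in\uc(x)$. Your write-up just makes explicit the steps that the paper's proof compresses into one sentence.
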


\begin{proof}
Elements of the standard step different from $x$
and from all elements of $\uc(x)$ were already
in the previous proper border and are, therefore,
mutually incomparable. None of them is below $x$,
by the first border property. If $y > x$ for some
$y\in B$, then $y\geq z \succ x$ for some $z\in B$,
and the antichain property of $B$ tells us that
$y=z$ so that it gets removed with $\uc(x)$.\qed
\end{proof}

However, we are left with the problem that we have
now a candidate border but we lack the lattice 
element for which it is intended to be a border.
In \cite{MarEkl08} and \cite{BSVG09}, the algorithm
moves on to an intent set of the same cardinality as $x$, 
whenever possible, and to as small as possible 
a larger intent set if all intents of the same 
cardinality are exhausted. In \cite{VaMiLe00}
it is shown that, for their variant of the Border 
algorithm, it suffices to follow a (reversed)
linear embedding of the lattice. Here we follow
this more flexible approach, which is easier now
that we have stated the necessary properties of
borders with no reference to the order of traversal:
there is no need of considering intent sets and their
cardinalities.

Both lattices and their Hasse diagrams can be
seen as directed acyclic graphs, by orienting
the inequalities in either direction; here we
choose to visualize edges $(x,y)$ as corresponding 
to $x\leq y$. A linear embedding corresponds to the 
well-known operation of topological sort of directed 
acyclic graphs, which we will employ for lattices in a 
``reversed'' way:

\begin{definition}
A reverse topological sort of $\LL$
is a total ordering $x_1,\ldots,x_n$ of $\LL$
such that $x_i\leq x_j$ always implies $j\leq i$.
\end{definition}

All our development could be performed with a standard 
topological sort, not reversed, that is, a linear 
embedding of the lattice's partial order. However, 
as it is customary in FCA to guide the visualization through 
the comparison of extents, the algorithms we build 
on were developed with a sort of ``built-in reversal'' 
that we inherit through reversing the topological sort
(see the similar discussion in Section~2.1 of~\cite{BSVG09}).
A reversed topological sort must start with $\top$, 
 hence the initialization is easy:

\begin{proposition}
\label{p:initial}
$B=\emptyset$ 
is a border for $\top\in\LL$.
\end{proposition}

 \begin{proof}
 Both conditions in the definition of border
 become vacuously true: the first one as $B=\emptyset$
 and the second one as the top element has no upper covers.\qed
 \end{proof}

\begin{theorem}
\label{th:advancing}
Let $x_1,\ldots,x_n$ be
a reverse topological sort of $\LL$.
Starting
with $B_1 = \emptyset$, define inductively
$B_{k+1}$ as the standard step for $B_k$ and $x_k$.
Then, for each $k$, $B_k$ is a border for $x_k$.
\end{theorem}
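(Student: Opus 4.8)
The plan is to prove the statement by induction on $k$, but carrying a hypothesis strengthened in two ways: besides ``$B_k$ is a (proper) border for $x_k$'', I would also keep track that $B_k\subseteq\{x_1,\ldots,x_{k-1}\}$ and that every already-visited element sits above some element of $B_k$, i.e.\ for each $w\in\{x_1,\ldots,x_{k-1}\}$ there is $y\in B_k$ with $y\leq w$. Equivalently, and more memorably, the invariant is simply that $B_k$ is exactly the set of minimal elements of $\{x_1,\ldots,x_{k-1}\}$; I would phrase the write-up in whichever of these two equivalent forms reads more smoothly. The only feature of a reverse topological sort I would use, stated contrapositively, is that $x_i\leq x_j$ implies $j\leq i$; consequently every element strictly above $x_k$ --- in particular every element of $\uc(x_k)$ --- occurs among $x_1,\ldots,x_{k-1}$, whereas $x_k$ itself occurs in none of the prefixes $\{x_1,\ldots,x_j\}$ with $j<k$.

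First I would note that the invariant already delivers the conclusion: the set of minimal elements of any subset of $\LL$ is an antichain; no minimal $x_j$ with $j<k$ can satisfy $x_j\leq x_k$, since that would force $k\leq j$, which gives the first border condition; and if $x_k\prec z$ then $z$ is a strict upper bound of $x_k$, hence one of the visited elements, hence lies above some minimal one, which gives the second. So it suffices to establish the invariant. The base case is immediate: $B_1=\emptyset$ is the (empty) set of minimal elements of the empty set $\{x_1,\ldots,x_0\}$, consistently with Proposition~\ref{p:initial} (a reverse topological sort must start at $\top$). For the inductive step I must check that enlarging the visited set from $\{x_1,\ldots,x_{k-1}\}$ to $\{x_1,\ldots,x_k\}$ changes its set of minimal elements exactly by one standard step, namely
\[
  \min\{x_1,\ldots,x_k\}\;=\;\bigl(\min\{x_1,\ldots,x_{k-1}\}\cup\{x_k\}\bigr)\setminus\uc(x_k).
\]

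For that equality: $x_k$ becomes a minimal element of the larger set, since an $x_j<x_k$ with $j\leq k$ would force $k\leq j$, and $x_k$ was not present before; a formerly minimal element $m$ stays minimal unless $x_k<m$, and for such an $m$ the relation $x_k<m$ is in fact $x_k\prec m$, because any $v$ with $x_k<v<m$ would be a strict upper bound of $x_k$, hence a visited element lying strictly below the minimal element $m$, which is impossible. Thus the elements removed are exactly those of $\uc(x_k)$; since $x_k\notin\uc(x_k)$, the two sides of the displayed equality coincide, and the standard step reproduces the minimal-elements invariant at stage $k+1$. (The ``proper'' part of the conclusion, if one wants it stated separately, also follows from the proposition already proved, that the standard step of a proper border is an antichain.)

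The step I expect to be the real obstacle is precisely the second border condition one stage later: an upper cover $z$ of $x_{k+1}$ need bear no order relation to $x_k$ whatsoever, so exhibiting an element of $B_{k+1}$ below $z$ genuinely requires knowing that $B_k$ --- and hence $B_{k+1}$, after discarding $\uc(x_k)$ --- still reaches every already-visited element from below; the bare hypothesis ``$B_k$ is a border for $x_k$'' says nothing about covers of the unrelated element $x_{k+1}$. Pinpointing this strengthening of the inductive hypothesis, and then checking that it survives the standard step (the two short $\uc(x_k)$-arguments above), is where the care lies; everything else is bookkeeping with the reverse-topological-sort inequality, which is exactly what forces all strict upper bounds of $x_k$ to have been visited already.
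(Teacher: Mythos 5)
Your proof is correct and takes essentially the same route as the paper's: the paper factors off exactly your strengthened invariant as a lemma ($B_k\subseteq\{x_1,\ldots,x_{k-1}\}$ and every already-visited $x_j$ lies above some element of $B_k$), proves it by induction on the standard step, and then derives both border conditions from the reverse-topological-sort property just as you do. The only deviation is that you upgrade the invariant to the sharper statement $B_k=\min\{x_1,\ldots,x_{k-1}\}$ --- strictly stronger than the paper's lemma, so calling the two formulations ``equivalent'' is a slight overstatement, but harmless, since you prove the stronger identity directly and your verification that the standard step sends the minimal elements of $\{x_1,\ldots,x_{k-1}\}$ to those of $\{x_1,\ldots,x_k\}$ is sound.
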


For clarity, we factor off the proof of the following
inductive technical fact, where we use the same notation 
as in the previous statement.

\begin{lemma}
$B_k\subseteq\{ x_1,\ldots,x_{k-1} \}$ and, 
for all 
 $x_j$ with 
$j<k$, there is $y\in B_k$ with $y\leq x_j$.
\end{lemma}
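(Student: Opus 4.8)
The plan is a straightforward induction on $k$ that tracks the inductive definition $B_{k+1} = B_k\cup\{x_k\}-\uc(x_k)$ of the borders. For the base case $k=1$ we have $B_1=\emptyset$, so $B_1\subseteq\{x_1,\ldots,x_0\}=\emptyset$ holds trivially, and the second clause is vacuous since there is no index $j<1$.

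For the inductive step I would assume the claim for $k$ and establish it for $k+1$. The containment $B_{k+1}\subseteq\{x_1,\ldots,x_k\}$ is immediate: by the induction hypothesis $B_k\subseteq\{x_1,\ldots,x_{k-1}\}$, and forming $B_k\cup\{x_k\}$ and then deleting a subset keeps us inside $\{x_1,\ldots,x_{k-1}\}\cup\{x_k\}=\{x_1,\ldots,x_k\}$. Next I would record the key small observation that $x_k\in B_{k+1}$: indeed $x_k\in B_k\cup\{x_k\}$, and $x_k\notin\uc(x_k)$ because $x_k\not\prec x_k$, so $x_k$ survives the standard step.

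It then remains to check the second clause for every $x_j$ with $j\le k$. The case $j=k$ is handled by the observation just made, taking $y=x_k$, since $x_k\le x_k$. For $j<k$, I would apply the induction hypothesis to obtain some $y\in B_k$ with $y\le x_j$. If $y\notin\uc(x_k)$, then $y\in B_{k+1}$ and we are done. Otherwise $y\in\uc(x_k)$, i.e.\ $x_k\prec y$, so $x_k<y\le x_j$, whence $x_k\le x_j$; since $x_k\in B_{k+1}$, the element $x_k$ itself witnesses the claim for $x_j$.

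The only genuinely delicate point — and the step I expect to require the most care in the write-up — is this last case: the witness $y$ supplied by the induction hypothesis may be precisely one of the elements of $\uc(x_k)$ that the standard step deletes, and one must notice that in that event $x_k$, which has just been added to the border, lies below $x_j$ and can replace $y$. Everything else is routine bookkeeping with the set operations defining the standard step, and the reverse-topological-sort hypothesis is not needed for this particular lemma (it enters only when the lemma is used to verify the border conditions in Theorem~\ref{th:advancing}).
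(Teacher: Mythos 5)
Your proof is correct and follows essentially the same route as the paper's: induction on $k$, noting that $x_k$ survives the standard step, and splitting on whether the inherited witness $y\in B_k$ lies in $\uc(x_k)$, replacing it by $x_k$ when it does. Your explicit justification that $x_k\notin\uc(x_k)$ is a small point the paper leaves implicit, but the argument is the same.
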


\begin{proof}
For $k=1$, the statements are vacuously true.
Assume it true for $k$, and consider
$B_{k+1} = B_k\cup\{x_k\}-\uc(x_k)$, 
the standard step for $B_k$ and $x_k$.
The first statement is clearly true.
For the second, $x_k$ is itself in
$B_{k+1}$ and, for the rest, inductively,
there is $y\in B_k$ with $y\leq x_j$.
We consider two cases;
if $y\notin\uc(x_k)$, then the same $y$
remains in $B_{k+1}$; otherwise, $x_k\prec y\leq x_j$,
and $x_k$ is the corresponding new $y$ in $B_{k+1}$.\qed
\end{proof}


\begin{proof}[of Theorem~\ref{th:advancing}]
Again by induction on $k$; we see that the basis is
Proposition~\ref{p:initial}. 
Assuming that $B_k$ is a border for $x_k$, we consider
$B_{k+1} = B_k\cup\{x_k\}-\uc(x_k)$. 
Applying the lemma, 
$B_{k+1}\subseteq\{ x_1,\ldots,x_k \}$,
which ensures immediately that 
$\forall y\in B_{k+1} \, (y \not\leq x_{k+1})$
by the property of the reverse topological sort,
and the first condition of borders follows.
For the second, pick any $z\in\uc(x_{k+1})$; 
by the condition of reverse topological sort, 
$z$, being a strictly larger element than~$x_{k+1}$, 
must appear earlier than it, so that $z=x_j$ with $j<k+1$.
Then, again the lemma tells us immediately
that there is $y\in B_{k+1}$ with $y\leq x_j=z$, 
as we need to complete the proof.\qed
\end{proof}

\subsection{The Generalized Border Algorithm}\label{ss:GBA}

The algorithm we end up validating through our theorems
has almost the same high-level description as the
rendering in \cite{BSVG09}; the most conspicuous
differences are: first, that a reverse topological
sort is used to initialize the traversal of
the lattice; and, second, that
the ``reversed lattice'' model in \cite{BSVG09}
has the consequence that their set-theoretic
intersection in computing candidates becomes
a lattice join in our generalization. Another
minor difference is that Proposition~\ref{p:initial}
spares us 
the
separate handling of the first element of the
lattice.


\begin{algorithm}[H]
RevTopSort($\LL$)\;
$B$ = $\emptyset$\;
$H$ = $\emptyset$\;
\For{$x$ \emph{in} $\LL$, \emph{according to the sort}}
{
	candidates = $\{ x \lor y \st y \in B \}$\;
	cover = minimals(candidates)\;
	\lFor{$z$ in {\rm cover}}{add $(x,z)$ to $H$}\;
	$B$ = $B\cup \{x\}-{}$cover\;
}
\medskip
\caption{The Generalized Border Algorithm}\label{a:GBA}
\end{algorithm}

\medskip

Theorem~\ref{th:advancing} and Proposition~\ref{p:initial}
tell us that the following invariant is maintained:
$B$ is a border for $x$. Then, 
the Hasse edges are computed and added to $H$
according to 
Theorem~\ref{th:useborder}, in two steps:
first, we prepare the list of joins $x \lor y$
and, then, we keep only the minimal elements 
in it. In essence, this process is the same 
as described 
(in somewhat different renderings)
in \cite{BSVG09}, \cite{MarEkl08} or \cite{VaMiLe00}; 
however,
while the definition of border given in \cite{VaMiLe00} 
(and recalled in \cite{BSVG09}) 
leads, eventually, to the same notion
employed in this paper, further development of a general algorithm
that works outside the formal concept analysis framework is dropped
off from \cite{VaMiLe00} on efficiency considerations. 
Moreover, the border algorithm described in
\cite{MarEkl08} works exclusively on the set 
of intents and assumes
the elements are sorted sizewise.
The validations of the algorithms 
in these references rely
very much, at some points, on the fact 
that the lattice is a
sublattice of a powerset and contains formal
concepts, explicitly operating set-theoretically
on their intents.
Theorem~\ref{th:useborder} captures the essence of
the notion of border and lifts 
the algorithm to arbitrary lattices.

One additional difference comes from the fact
that the cost of computing the meet and join 
operations plays a role in the complexity
analysis, but is not available in the general case.
If we assume that meet and join
operations take constant time, then
the total running time of the algorithm 
(except for the sort initialization, 
which takes $\mathcal{O}(|\LL|\log|\LL|)$)
is bounded by
$\mathcal{O}(|\LL|w(\LL)^2)$.
By comparison with \cite{VaMiLe00},
one can see that one factor
of the formula given in \cite{VaMiLe00} gets
dropped under the constant time assumption 
for computing meet and join.
However, this assumption may be unreasonable
in certain applications; the same reference
indicates
that their FCA target case requires a considerable 
amount of graph search for the same operations.
Nevertheless, in absence of further information about
the specific lattice at hand, it is not possible
to provide a finer analysis.

 We must point out that, in our implementation, 
 we have employed a heapsort-based version
 that keeps providing us the next element to handle
 by means of an iterator, instead of completing 
 the sorting step for the initialization.

\vfill 

\section{Distributivity and the iPred Algorithm}\label{s:iPA}

In \cite{BSVG09}, an extra sophistication is introduced
that, as demonstrated both formally in the complexity
analysis of the algorithm and also practically, leads
to a faster algorithm; namely, if some further information 
is maintained along, once the candidates are available 
there is a constant-time test to pick those that are
in the cover, by employing the duality 
$y\in\uc(x)\Leftrightarrow x\in\lc(y)\Leftrightarrow x\prec y$.
Constant time also suffices to maintain 
the additional information. This gives the iPred algorithm.
However, it seems that the unavoidable price is to work on 
formal concepts, as the extra information is heavily
set-theoretic (namely, a union of set differences of 
previously found cover sets for the candidate under study).

Again we show that a fully abstract, lattice-theoretic
interpretation exists, and we show that the essential
property that allows for the algorithm to work is
distributivity: be it due to a distributive $\LL$, or,
as in fact happens in iPred, due to the embedding of
the lattice into a distributive lattice, in the same way 
as concept lattices (possibly nondistributive)
can be embedded in the distributive powerset lattice.

We start treating the simplest case, of very limited
usefulness in itself but good as stepping stone towards
the next theorem. The property where distributivity can
be applied later, if available, is as follows:

\begin{proposition}
\label{p:nondist}
Consider two comparable elements, $x < z$, from $\LL$; 
let $Y\subseteq\lc(z)$ be the set of lower covers
of $z$ that show up in the reverse topological sort 
before~$x$ (it could be empty).
Then, $x\in\lc(z)$ if and only if
$\bigwedge_{y\in Y} (x \lor y) \geq z$.
\end{proposition}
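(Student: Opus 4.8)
The plan is to prove the two implications separately, folding in the degenerate case $Y=\emptyset$ via the usual convention $\bigwedge\emptyset=\top$: under that convention the inequality $\bigwedge_{y\in Y}(x\lor y)\geq z$ holds automatically when $Y$ is empty, and the claim then reduces to showing that $x<z$ together with ``no lower cover of $z$ precedes $x$'' already forces $x\prec z$ --- which will come out of the argument for the nontrivial direction anyway.

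For the forward implication, assume $x\in\lc(z)$, that is $x\prec z$. Each $y\in Y$ is a lower cover of $z$ occurring before $x$ in the reverse topological sort, so in particular $y\neq x$; applying Lemma~\ref{l:twoprecs} to the two distinct immediate predecessors $x\prec z$ and $y\prec z$ yields $x\lor y=z$. Hence $\bigwedge_{y\in Y}(x\lor y)=z$ when $Y\neq\emptyset$, and $=\top\geq z$ when $Y=\emptyset$; either way the required inequality holds.

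For the converse I would argue by contraposition. Suppose $x<z$ but $x\notin\lc(z)$, i.e.\ $x\not\prec z$. By Proposition~\ref{p:covers} there is $z'\in\lc(z)$ with $x\leq z'\prec z$, and since $x\not\prec z$ we must have $x\neq z'$, hence $x<z'$. Because $x<z'$, the defining property of a reverse topological sort forces $z'$ to occur before $x$ in the sort, so $z'\in Y$. But $x\leq z'$ gives $x\lor z'=z'$, whence $\bigwedge_{y\in Y}(x\lor y)\leq x\lor z'=z'<z$, contradicting $\bigwedge_{y\in Y}(x\lor y)\geq z$. Therefore $x\prec z$, i.e.\ $x\in\lc(z)$.

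The argument is short, and I expect no genuine obstacle: the only points needing care are the correct direction of the reverse topological sort --- that $x<z'$ makes $z'$ \emph{precede} $x$, so that $z'$ really belongs to $Y$ --- and the bookkeeping for the empty family, where the meet is $\top$.
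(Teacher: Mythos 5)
Your proof is correct and follows essentially the same route as the paper's: the forward direction via Lemma~\ref{l:twoprecs}, and the converse by exhibiting (via Proposition~\ref{p:covers}) a lower cover $z'$ with $x<z'\prec z$ that must lie in $Y$, forcing the meet below $z$. The only cosmetic difference is that you absorb the $Y=\emptyset$ case into the contrapositive argument instead of treating it separately, which is fine.
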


\begin{proof}
Applying Proposition~\ref{p:covers},
we know that there is some $y\in\lc(z)$
such that $x\leq y\prec z$.
Any such $y$, if different from $x$,
must appear before $x$ in
the reverse topological sort.

Suppose first that no lower covers of $z$ appear
before $x$, that is, $Y=\emptyset$. Then, no such 
$y$ different from $x$ can exist; we have that
both $x = y \prec z$ and $\bigwedge_{y\in Y} (x \lor y) = \top \geq z$ 
trivially hold.

In case $Y$ is nonempty, assume first $x \prec z$;
we can apply Lemma~\ref{l:twoprecs}:
$x \lor y=z$ for every $y\in Y$, hence
$\bigwedge_{y\in Y} (x \lor y) = z$. 
To argue the converse, assume $x\notin\lc(z)$ and let
$x\leq y'\prec z$ as before, where we know further
that $x\neq y'$: then $y'\in Y$, so that
$\bigwedge_{y\in Y} (x \lor y) \leq (x \lor y') = y' < z$.\qed
\end{proof}

This means that the test for minimality 
of Algorithm \ref{a:GBA} can be replaced
by checking the indicated inequality; but it is unclear
that we really save time, as a number of joins have to
be performed (between the current element $x$ and all 
the elements in the lower cover of the candidate $z$ 
that appeared before $x$ in the reverse topological sort) 
and the meet of their results computed.
However, clearly, in distributive lattices the test 
can be rephrased in the following, more convenient form:

\begin{proposition}
Assume $\LL$ distributive.
In the same conditions as in the previous proposition,
$x$ is in the lower cover of $z$ if and only if 
$x \lor (\bigwedge_{y\in Y} y) \geq z$.
\end{proposition}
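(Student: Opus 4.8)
The plan is to reduce the claim to the previous proposition, whose statement already gives the equivalence $x\in\lc(z)\Leftrightarrow\bigwedge_{y\in Y}(x\lor y)\ge z$; it therefore suffices to show that, under distributivity, $\bigwedge_{y\in Y}(x\lor y)\ge z$ holds if and only if $x\lor(\bigwedge_{y\in Y}y)\ge z$. In fact I would prove something slightly stronger, namely that the two left-hand sides are \emph{equal} as lattice elements, which of course immediately yields the equivalence of the two inequalities. This handles uniformly the degenerate case $Y=\emptyset$ as well, where both sides reduce to $\top$ under the usual conventions $\bigwedge\emptyset=\top$ (as already used in the previous proof).

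First I would recall the standard fact that finite distributivity propagates to arbitrary finite meets and joins: in a finite distributive lattice, $x\lor\bigwedge_{y\in Y}y=\bigwedge_{y\in Y}(x\lor y)$ for every finite $Y$. This is the infinitary-looking but really finitary distributive law, provable by a routine induction on $|Y|$ from the binary identity $x\lor(a\land b)=(x\lor a)\land(x\lor b)$, with the base case $Y=\emptyset$ giving $x\lor\top=\top$. Since all our lattices are finite, there is no subtlety here.

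With that identity in hand the proof is essentially one line: substituting $\bigwedge_{y\in Y}(x\lor y)=x\lor(\bigwedge_{y\in Y}y)$ into the characterization from the previous proposition, $x\in\lc(z)$ iff $\bigwedge_{y\in Y}(x\lor y)\ge z$ iff $x\lor(\bigwedge_{y\in Y}y)\ge z$, which is exactly the asserted statement. I would phrase the write-up so that the only real content cited is the previous proposition plus the generalized distributive law.

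There is no serious obstacle here; the one point worth a sentence of care is the empty-$Y$ case and, more importantly, making sure the reader sees \emph{why} this reformulation is the useful one: the right-hand side computes a single meet $\bigwedge_{y\in Y}y$ once and then performs a single join with $x$, rather than forming one join per element of $Y$ and then meeting the results. If anything, the delicate issue is conceptual rather than mathematical, since in the iPred setting the relevant distributive lattice is not $\LL$ itself but a distributive lattice into which $\LL$ embeds via a join-semilattice homomorphism (recall the discussion around Lemma~\ref{l:joinmorph} and Figure~\ref{fg:lattices}); the present proposition is stated for genuinely distributive $\LL$, so I would keep it self-contained and defer the homomorphism subtleties to the subsequent theorem that actually powers the Generalized iPred Algorithm.
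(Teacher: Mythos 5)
Your proof is correct and matches the paper's intent exactly: the paper states this proposition without a separate proof, treating it as the immediate consequence of Proposition~\ref{p:nondist} together with the finitary distributive identity $x\lor(\bigwedge_{y\in Y}y)=\bigwedge_{y\in Y}(x\lor y)$, which is precisely your argument (including the harmless empty-$Y$ convention).
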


This last version of the test is algorithmically useful: as we
keep identifying elements $Y = \{y_1,\ldots,y_m\}$ of $\lc(z)$, we can
maintain the value of $y = \bigwedge_{i\in \{1,\ldots,m\}} y_i$; then, 
we can test a candidate $z$                      
by computing $x\lor y$ and comparing this value to $z$. 
Afterwards, we update $y$ to $y\land x$
if $x = y_{m+1}$ is indeed in the cover. This may save the loop
that tests for minimality at a                  
small price.

However, unfortunately, if the lattice is not
distributive, this faster test may fail:
given $Y \subseteq \lc(z)$,
the cover elements found so far along the
reverse topological sort, it is always true that 
$x$ is in the lower cover of $z$ if 
$x \lor (\bigwedge_{y \in Y} y) \geq z$,
because 
$z \leq x \lor (\bigwedge_{y \in Y} y) \leq \bigwedge_{y\in Y} (x \lor y)$
and, then, one of the directions of Proposition~\ref{p:nondist} applies;
but the converse does not hold in general. 
Again an example is furnished by 
Figure~\ref{fg:lattices}(a), 
one of the basic,
standard examples of a small nondistributive lattice; 
assume that the 
traversal follows the natural ordering of the
labels, and consider what happens after seeing
that 1 and 2 are indeed lower covers of $z=\top$.
Upon considering $x=3$, we have $Y = \{1,2\}$, so that 
$x \lor (\bigwedge Y) = x \lor \bot = x < z$,
yet $x$ is a lower cover of $z$ and, in fact,
$\bigwedge_{y\in Y} (x \lor y) = (3\lor 1)\land(3\lor 2) = \top$.
Hence, the 
distributivity condition is necessary for the
correctness of the faster test.

\subsection{The Generalized iPred Algorithm}\label{ss:GiPA}

The aim of this subsection is to show the
main contribution of this paper:
we can spare the loop that tests candidates for
minimality in an indirect way,
whenever a distributive lattice is
available where we can embed~$\LL$.
However, we must be careful in how the embedding
is performed: the right tool is an injective homomorphism 
of join-semilattices. Recall that, often, this will \emph{not} be
a lattice morphism. Such an example is the identity morphism having as domain the carrier set of a concept lattice $\LL$ over the set of attributes $X$, and as range, $\mathcal{P}(X)$ (see Section \ref{s:Conc} for more details on this particular case).

\begin{theorem}\label{thm:iPred}
Let $(\LL',\leq,\lor)$ be a distributive join-semilattice and 
$f : \LL \rightarrow \LL'$ an injective 
homomorphism.
Consider two comparable elements, $x < z$, from $\LL$; 
let $Y\subseteq\lc(z)$ be the set of lower covers
of $z$ that show up in the reverse topological sort 
before~$x$.
Then, $x\prec z$ if and only if
$f(x) \lor (\bigwedge_{y\in Y} f(y)) \geq f(z)$.
\end{theorem}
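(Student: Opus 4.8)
The plan is to leverage the two crucial facts already in place: Proposition~\ref{p:nondist}, which characterizes $x \prec z$ (equivalently $x \in \lc(z)$) by the inequality $\bigwedge_{y\in Y}(x\lor y) \geq z$ in $\LL$ itself; and Lemma~\ref{l:joinmorph}, which tells us that a join-semilattice homomorphism respects meets only up to $\leq$, i.e.\ $f(\bigwedge Y') \leq \bigwedge f(Y')$. The key structural observation is that, although $f$ need not be a lattice morphism, we can still transport the \emph{one} direction of the distributivity argument we actually need through $f$, provided we work in $\LL'$ where distributivity \emph{is} available. So I would first reduce the statement to Proposition~\ref{p:nondist} applied in $\LL$, then show the image-side inequality is equivalent to it by a two-way argument, using injectivity of $f$ to recover the order on $\LL$ from the order on $\LL'$ (recall from the preliminaries that injective $f$ gives $x\leq y \Leftrightarrow f(x)\leq f(y)$).

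Concretely, for the forward direction assume $x \prec z$. If $Y=\emptyset$ the meet is $\top$ in $\LL'$ and the inequality is immediate; otherwise, by Lemma~\ref{l:twoprecs}, $x\lor y = z$ for every $y\in Y$, so $f(x)\lor f(y) = f(x\lor y) = f(z)$ for each such $y$ (here we use that $f$ \emph{is} a join-homomorphism). Hence $f(x)\lor\bigl(\bigwedge_{y\in Y} f(y)\bigr) \geq f(x)\lor f(y')$ for any fixed $y'\in Y$ only if we are careful — actually the cleanest route is: $\bigwedge_{y\in Y} f(y) \geq \bigwedge_{y\in Y} $ nothing; instead observe directly that in the distributive semilattice $\LL'$, $f(x)\lor\bigl(\bigwedge_{y\in Y} f(y)\bigr) = \bigwedge_{y\in Y}\bigl(f(x)\lor f(y)\bigr) = \bigwedge_{y\in Y} f(z) = f(z) \geq f(z)$. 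That settles one direction.

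For the converse, assume $f(x)\lor\bigl(\bigwedge_{y\in Y} f(y)\bigr) \geq f(z)$. By distributivity in $\LL'$ this rewrites as $\bigwedge_{y\in Y}\bigl(f(x)\lor f(y)\bigr) \geq f(z)$, i.e.\ $\bigwedge_{y\in Y} f(x\lor y) \geq f(z)$. Now apply Lemma~\ref{l:joinmorph} in the form $f\bigl(\bigwedge_{y\in Y}(x\lor y)\bigr) \leq \bigwedge_{y\in Y} f(x\lor y)$ — wait, that inequality points the wrong way for a direct conclusion, so here is where the real work sits: I need $f\bigl(\bigwedge_{y\in Y}(x\lor y)\bigr) \geq f(z)$, and Lemma~\ref{l:joinmorph} only gives an upper bound on the left-hand side. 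The resolution is that each $x\lor y$ with $y\in Y$ satisfies $x\lor y \geq z$ or not; if $x\notin\lc(z)$, then by the converse half of Proposition~\ref{p:nondist} there is some $y'\in Y$ with $x\lor y' = y' < z$, whence $f(x\lor y') = f(y') < f(z)$ by injectivity, so $\bigwedge_{y\in Y} f(x\lor y) \leq f(y') < f(z)$, contradicting the hypothesis. Thus $x\in\lc(z)$, i.e.\ $x\prec z$.

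The main obstacle, as the above makes explicit, is that $f$ does \emph{not} commute with meets, so one cannot simply push Proposition~\ref{p:nondist}'s inequality through $f$; instead the argument must be organized as a contrapositive on the converse side, exploiting that the \emph{witness} $y'$ produced by Proposition~\ref{p:covers}/\ref{p:nondist} is a single element of $\LL$ on which $f$ behaves well, and that injectivity of $f$ preserves the strict inequality $y' < z$. The forward side, by contrast, only ever needs $f$ to preserve joins, which it does by assumption, together with distributivity of $\LL'$. I would present the proof in exactly these two blocks, handling the $Y=\emptyset$ case first as in Proposition~\ref{p:nondist}, and being explicit that $\bigwedge\emptyset = \top$ in $\LL'$ (using that $\LL'$, extended if necessary, has a top).

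\begin{proof}
We first dispose of the degenerate case $Y = \emptyset$. By Proposition~\ref{p:covers} there is $y\in\lc(z)$ with $x \leq y \prec z$; if $x \neq y$ then $y$ is a strictly larger element than $x$ and hence appears before $x$ in the reverse topological sort, so $y\in Y$, a contradiction. Therefore $x = y \prec z$, so the left-hand side of the claimed equivalence holds. On the right-hand side, $\bigwedge_{y\in Y} f(y) = \top$ in $\LL'$, so $f(x)\lor(\bigwedge_{y\in Y} f(y)) = \top \geq f(z)$ also holds. Thus both sides are true and the equivalence is trivially valid.

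Assume now $Y \neq\emptyset$. Since $\LL'$ is distributive,
\[
f(x) \lor \Bigl(\bigwedge_{y\in Y} f(y)\Bigr) = \bigwedge_{y\in Y}\bigl(f(x)\lor f(y)\bigr) = \bigwedge_{y\in Y} f(x\lor y),
\]
the last equality because $f$ is a join-semilattice homomorphism. So the right-hand condition of the theorem is equivalent to $\bigwedge_{y\in Y} f(x\lor y) \geq f(z)$.

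For the forward direction, suppose $x\prec z$. For every $y\in Y$ we have $y\prec z$ as well, and $y\neq x$, so Lemma~\ref{l:twoprecs} gives $x\lor y = z$. Hence $f(x\lor y) = f(z)$ for all $y\in Y$, and therefore $\bigwedge_{y\in Y} f(x\lor y) = f(z) \geq f(z)$, which is the desired inequality.

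For the converse, suppose the right-hand condition holds, i.e.\ $\bigwedge_{y\in Y} f(x\lor y) \geq f(z)$, but, towards a contradiction, that $x\notin\lc(z)$. By Proposition~\ref{p:covers} choose $y'\in\lc(z)$ with $x\leq y'\prec z$; since $x\notin\lc(z)$ we have $x\neq y'$, and as above $y'$ appears before $x$ in the reverse topological sort, so $y'\in Y$. From $x\leq y'$ we get $x\lor y' = y'$, and $y' \prec z$ gives $y' < z$. Applying the homomorphism $f$ and using its injectivity, $f(y') < f(z)$, that is, $f(x\lor y') < f(z)$. But then
\[
\bigwedge_{y\in Y} f(x\lor y) \leq f(x\lor y') < f(z),
\]
contradicting the assumed inequality. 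Hence $x\in\lc(z)$, i.e.\ $x\prec z$.\qed
\end{proof}
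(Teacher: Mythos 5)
Your proof is correct and follows essentially the same route as the paper: the same case split on $Y=\emptyset$, the same use of distributivity of $\LL'$ together with the join-homomorphism property, and the same converse argument via a witness $y'\in Y$ with $x\leq y'\prec z$ and injectivity of $f$ to get $f(y')<f(z)$. The only (harmless) difference is in the forward direction, where you invoke Lemma~\ref{l:twoprecs} directly to get $x\lor y=z$ for every $y\in Y$, hence $f(x\lor y)=f(z)$, which bypasses the paper's detour through Proposition~\ref{p:nondist} and Lemma~\ref{l:joinmorph}.
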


\begin{proof}
If $Y = \emptyset$ we have $x \prec z$ as in
Proposition~\ref{p:nondist}; for this case,
$\bigwedge_{y\in Y} f(y) = \top$ (of $\LL'$) 
and $f(x) \lor (\bigwedge_{y\in Y} f(y))  
= f(x) \lor \top = \top \geq f(z)$.

For the case where $Y\neq\emptyset$, 
assume first $x\prec z$ and apply Proposition~\ref{p:nondist}:
we have that 
$\bigwedge_{y\in Y} (x \lor y) \geq z$ whence
$f(\bigwedge_{y\in Y} (x \lor y)) \geq f(z)$. 
By Lemma~\ref{l:joinmorph}, we obtain
$f(z) \leq f(\bigwedge_{y\in Y} (x \lor y)) \leq 
\bigwedge_{y\in Y} f(x \lor y) = 
\bigwedge_{y\in Y} (f(x) \lor f(y)) = 
f(x) \lor \bigwedge_{y\in Y} f(y)$,
where we have applied that $f$ commutes with join
and that $\LL'$ is distributive.

For the converse, arguing along the same lines
as in Proposition~\ref{p:nondist},
assume $x\notin\lc(z)$ and let
$x\leq y'\prec z$ with $x\neq y'$ so that $y'\in Y$:
necessarily $\bigwedge_{y\in Y} f(y) \leq f(y')$,
so that 
$f(x) \lor (\bigwedge_{y\in Y} f(y)) \leq f(x) \lor f(y') 
= f(x\lor y') = f(y') < f(z)$, where the last
step makes use of injectiveness.\qed
\end{proof}

The generalized iPred algorithm is based on this
theorem, which proves it correct. 
In it, the 
homomorphism $f$ is assumed available, and table LC keeps,
for each $z$, the 
meet of the $f(x)$'s for all the lower
covers $x$ of $z$ seen so far.

\medskip

\begin{algorithm}[H]
RevTopSort($\LL$)\;
$B$ = $\emptyset$\;
$H$ = $\emptyset$\;
\For{$x$ \emph{in} $\LL$, \emph{according to the sort}}
{
	$\LC[x]=\top$\;
	candidates = $\{ x \lor y \st y \in B \}$\;
	\For{$z$ \emph{in candidates}}
		{\If {$f(x) \lor \LC[z] \geq f(z)$}
			{
				add $(x,z)$ to $H$\;
				$\LC[z]=\LC[z] \wedge f(x)$\;
				$B$ = $B - \{z\}$\;
			}}
	$B$ = $B\cup \{x\}$\;
}
\medskip
\caption{The Generalized iPred Algorithm}\label{a:GiPred}
\end{algorithm}

\medskip
 
In the Appendix below, we provide some example runs 
for further clarification.
Regarding the time complexity, again 
we lack information about the cost of meets, joins, and 
comparisons in both
lattices, and also about the cost of computing the homomorphism. 
Assuming constant time for these operations, the 
running time of the generalized iPred algorithm is 
$\mathcal{O}(|\LL|w(\LL))$ (plus sorting): the main loop (line 4-15) is repeated $|\LL|$ times, and then for each of the at most $w(\LL)$ candidates, the algorithm checks if a certain condition is met (in constant time) and updates the diagram and the border in the positive case. 

If meets and joins do not take
constant time, there is little to say
at this level of generality; however, 
for the particular case of  the original
iPred, which only works for lattices of formal concepts,
see \cite{BSVG09}: in the running time analysis there,
one extra factor appears since the meet operation
(corresponding to a set union plus a closure 
operation) is not guaranteed to work in constant time.

\section{Conclusions and Future Work}\label{s:Conc}

We have provided a formal framework for the task of
computing Hasse diagrams of arbitrary lattices through the notion of
``border associated with a lattice element''. Although the concept of
\emph{border} itself is not new, our approach provides a different,
more ``axiomatic'' point of view that facilitates considerably
the application of this notion to algorithms that
construct Hasse diagrams outside the formal concept analysis world.

While Algorithm \ref{a:GBA} is a clear, straightforward generalization
of the Border algorithm of \cite{VaMiLe00,BSVG09} (although the
correctness proof is far less straightforward),
we consider that we should 
explain further in what sense the iPred
algorithm comes out as a particular case of Algorithm \ref{a:GiPred}.
In fact,
the iPred algorithm
uses set-theoretic operations and, therefore, is
operating with sets that do not belong to the 
closure space: effectively, it has moved out of
the concept lattice into the (distributive)
powerset lattice. 
Starting from a concept lattice
$(\LL,\leq, \lor,\wedge)$ on a set $X$ of attributes, 
we can define: 

\begin{itemize}
\item $x \leq y \Leftrightarrow x \supseteq y$
\item $x \lor y := x \cap y$
\item $x \land y := \bigvee \{z \in \LL \st z \leq x, z \leq y\} = \bigcap \{z \in \LL \st z \supseteq x, z \supseteq y\}$
\item $\top:=\emptyset, \bot:=X$
\end{itemize}

Thus, $\LL$ is a join-subsemilattice of the (reversed)
powerset on $X$, and we can define $f : \LL \rightarrow \mathcal{P}(X)$ as
the identity function: it is injective, and it is a join-homomorphism
since $\LL$, being a concept lattice, is closed under
set-theoretic intersection. Therefore, Theorem \ref{thm:iPred} can 
be translated to: $x \in \lc(z)$ if and only if 
$x \cap (\bigcup_{y \in Y} y) \subseteq z$, where $Y$ is the set of lower covers of $z$ 
already found; this is fully equivalent to the condition behind 
algorithm iPred of \cite{BSVG09} (see Proposition 1 on page 169 
in \cite{BSVG09}). Additionally, iPred works on one specific
topological sort, where all intents of the same cardinality
appear together; our generalization shows that this is not 
necessary: any linear embedding suffices.

A further application we have in mind refers to various forms of implication known as multivalued 
dependency clauses \cite{SDPF81,SDPF87}; in \cite{Baix07,Baix08,BaiBal06},
these clauses are shown to be related to partition
lattices in a similar way as implications are related
to concept lattices through the Guigues-Duquenne basis
(\cite{GanWil99,GuiDuq86}); further, certain database dependencies
(the degenerate multivalued dependencies of \cite{SDPF81,SDPF87})
are related to these clauses in the same way as
functional dependencies correspond to implications.
Data Mining algorithms that extract multivalued
dependencies do exist \cite{SavFla00} but we believe
that alternative ones can be designed using Hasse
diagrams of the corresponding partition lattices
or related structures like split set lattices \cite{Baix07}.
The task is not immediate, as functional and
degenerate multivalued dependencies are of the
so-called ``equality-generating'' sort but
full-fledged
multivalued dependencies are of the so-called
``tuple-generating'' sort, and their connection
to lattices is more sophisticated (see \cite{Baix07});
but we still hope that further work along this
lattice-theoretic approach to Hasse diagrams would
allow us to create a novel application to 
multivalued dependency mining.

\section{Appendix}

We exemplify here some runs of iPred, for the sake
of clarity. First we see how it operates on the
lattice in Figure~\ref{fg:lattices}(a), 
denoted $\LL$ here, using 
as $f$ the injective homomorphism into 
the distributive lattice of
Figure~\ref{fg:lattices}(b) provided by the labels. 
The run is reported in Table~\ref{tb:run}, where
we can see that we identify the respective upper covers
of each of the lattice elements in turn. The linear
order is assumed to be $( \top, 1, 2, 3, \bot )$.
Only the last loop has more than one candidate,
in fact three. The snapshots of the values of 
$B$, $H$, and $\LC$ reported in each row 
(except the initialization) are taken at the end 
of the corresponding loop, so that each reported
value of $B$ is a border for the next row. 
In the Hasse edges $H$, thin lines represent
edges that are yet to be found, and thick lines
represent the edges found so far.
Recall that the values
of $\LC$ are actually elements of the distributive
lattice of Figure~\ref{fg:lattices}(b), and not from $\LL$.

\def\1{\begin{tikzpicture}
\draw [white] (0,0.25) -- (0,0.2); 
\draw [ultra thin] 
(0,0.2) -- (-0.2,0) (0,0.2) -- (0,0) (0,0.2) -- (0.2,0)
(-0.2,0) -- (0,-0.2) (0,0) -- (0,-0.2) (0.2,0) -- (0,-0.2);
\end{tikzpicture}}
\def\2{\begin{tikzpicture}
\draw [white] (0,0.25) -- (0,0.2); 
\draw [very thick] (0,0.2) -- (-0.2,0); 
\draw [ultra thin] (0,0.2) -- (0,0) (0,0.2) -- (0.2,0)
(-0.2,0) -- (0,-0.2) (0,0) -- (0,-0.2) (0.2,0) -- (0,-0.2);
\end{tikzpicture}}
\def\3{\begin{tikzpicture}
\draw [white] (0,0.25) -- (0,0.2); 
\draw [very thick] (0,0.2) -- (-0.2,0) (0,0.2) -- (0,0);  
\draw [ultra thin] (0,0.2) -- (0.2,0) -- (0,-0.2) -- (-0.2,0) 
(0,0) -- (0,-0.2);
\end{tikzpicture}}
\def\4{\begin{tikzpicture}
\draw [white] (0,0.25) -- (0,0.2); 
\draw [very thick] (-0.2,0) -- (0,0.2) (0,0.2) -- (0,0) (0,0.2)-- (0.2,0);
\draw [ultra thin] (-0.2,0) -- (0,-0.2) -- (0,0) (0.2,0) -- (0,-0.2);
\end{tikzpicture}}
\def\5{\begin{tikzpicture}
\draw [white] (0,0.25) -- (0,0.2); 
\draw [very thick] 
(0,0.2) -- (-0.2,0) (0,0.2) -- (0,0) (0,0.2) -- (0.2,0)
(-0.2,0) -- (0,-0.2) (0,0) -- (0,-0.2) (0.2,0) -- (0,-0.2);
\end{tikzpicture}}

\begin{table}
\begin{center}
\begin{tabular}{|c|c|c|c|c|c|c|c|c|}
\hline
$\LL$  & $B$ & $H$ & cand & $\LC[\top]$&$\LC[1]$&$\LC[2]$&$\LC[3]$&$\LC[\bot]$\\
\hline
init   & $\0$ &\1&      &            &        &        &        &           \\
\hline
$\top$ &$\{\top\}$&\1& $\0$ & $\top$   &        &        &        &           \\
1      &$\{1\}$&\2& $\{\top\}$ &   1  & $\top$ &        &        &           \\
2      &$\{1,2\}$&\3& $\{\top\}$ & 4  & $\top$ & $\top$ &        &           \\
3      &$\{1,2,3\}$&\4& $\{\top\}$ & $\bot$ & $\top$ & $\top$ & $\top$ &     \\
$\bot$ & $\0$      &\5& $\{1,2,3\}$ & $\bot$ & $\bot$ & $\bot$ & $\bot$ & $\top$ \\
\hline
\end{tabular}
\medskip
\caption{Example run of the iPred algorithm using the lattices in Figure~\ref{fg:lattices}}
\label{tb:run}
\end{center}
\end{table}

All along the run we can
see that $\LC[z]$ indeed maintains the meet of the set of
predecessors found so far for $f(z)$ in the distributive
embedding lattice; of course, this meet is $\top$ whenever
the set is empty.
 
Let us compare with the run on
the distributive lattice in Figure~\ref{fg:distlattice}, 
where the homomorphism $f$ is now the identity. Observe that
the only different Hasse edge is the one above 3 which
now goes to 2 instead of going to $\top$. Again the 
linear sort follows the order of the labels. 

\begin{figure}
\begin{center}
\begin{tikzpicture}
\draw (0,0) -- (-1,1) -- (0,2) -- (1,1) -- (0,0) -- (1,-1) -- (2,0) -- (1,1);
\draw (0,2) node [fill=white] {$\top$}
      (-1,1) node [fill=white] {1}
      (1,1) node [fill=white] {2} 
      (2,0) node [fill=white] {3} 
      (0,0) node [fill=white] {4} 
      (1,-1) node [fill=white] {$\bot$}; 
\end{tikzpicture}
\caption{A distributive lattice}
\label{fg:distlattice}
\end{center}
\end{figure}
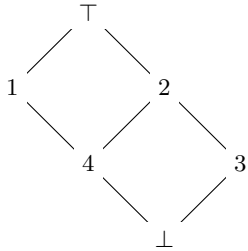

Due to the similarity among the Hasse diagrams, the
run of generalized iPred on this lattice starts 
exactly like the one already given, 
up to the point 
where node 3 is being processed. At that point, 2 is
candidate and will indeed create an edge, but 1 leads
to candidate $1\lor3=\top$ for which the test fails,
as $\LC[\top] = 4$ at that point, and $3\lor 4 = 2 < \top$.
Hence, this candidate has no effect. After this, the visits
to 4 and $\bot$ complete the Hasse diagram with their
corresponding upper covers.

\def\1{\begin{tikzpicture}
\draw [white] (0,0.35) -- (0,0.3); 
\draw [ultra thin] 
(0.15,0.15) -- (0,0.3) -- (-0.15,0.15) -- (0,0) -- (0.15,0.15) -- (0.3,0)
-- (0.15,-0.15) -- (0,0);
\end{tikzpicture}}
\def\2{\begin{tikzpicture}
\draw [white] (0,0.35) -- (0,0.3); 
\draw [very thick] (0,0.3) -- (-0.15,0.15); 
\draw [ultra thin] 
(0.15,0.15) -- (0,0.3) (-0.15,0.15) -- (0,0) -- (0.15,0.15) -- (0.3,0)
-- (0.15,-0.15) -- (0,0);
\end{tikzpicture}}
\def\3{\begin{tikzpicture}
\draw [white] (0,0.35) -- (0,0.3); 
\draw [very thick] 
(0,0.3) -- (-0.15,0.15) (0,0.3) -- (0.15,0.15); 
\draw [ultra thin] 
(-0.15,0.15) -- (0,0) -- (0.15,0.15) -- (0.3,0)
-- (0.15,-0.15) -- (0,0);
\end{tikzpicture}}
\def\4{\begin{tikzpicture}
\draw [white] (0,0.35) -- (0,0.3); 
\draw [very thick] 
(0,0.3) -- (-0.15,0.15) (0,0.3) -- (0.15,0.15) -- (0.3,0);
\draw [ultra thin] 
(-0.15,0.15) -- (0,0) -- (0.15,0.15) 
(0.3,0) -- (0.15,-0.15) -- (0,0);
\end{tikzpicture}}
\def\5{\begin{tikzpicture}
\draw [white] (0,0.35) -- (0,0.3); 
\draw [very thick] 
(0,0.3) -- (-0.15,0.15) (0,0.3) -- (0.15,0.15) -- (0.3,0)
(-0.15,0.15) -- (0,0) (0.15,0.15) -- (0,0);
\draw [ultra thin] 
(0.3,0) -- (0.15,-0.15) -- (0,0);
\end{tikzpicture}}
\def\6{\begin{tikzpicture}
\draw [white] (0,0.35) -- (0,0.3); 
\draw [very thick] 
(0,0.3) -- (0.15,0.15) -- (0.3,0)
(-0.15,0.15) -- (0,0) -- (0.15,-0.15) -- (0,0) 
(0,0.3) -- (-0.15,0.15) (0.15,0.15) -- (0,0) (0.3,0) -- (0.15,-0.15);
\end{tikzpicture}}

\begin{table}
\begin{center}
\begin{tabular}{|c|c|c|c|c|c|c|c|c|c|}
\hline
$\LL$  & $B$ & $H$ & cand & $\LC[\top]$&$\LC[1]$&$\LC[2]$&$\LC[3]$&$\LC[4]$&$\LC[\bot]$\\
\hline
init   & $\0$ &\1&      &            &        &        &        &        &   \\
\hline
$\top$ &$\{\top\}$&\1& $\0$ & $\top$   &        &        &        &     &    \\
1      &$\{1\}$&\2& $\{\top\}$ &   1  & $\top$ &        &        &      &    \\
2      &$\{1,2\}$&\3& $\{\top\}$ & 4  & $\top$ & $\top$ &        &     &     \\
3      &$\{1,3\}$&\4& $\{\top,2\}$ & 4 & $\top$ & 3 & $\top$ & &    \\
4      &$\{3,4\}$&\5& $\{1,2\}$ & 4 & 4 & $\bot$ & $\top$ &  $\top$ &   \\
$\bot$ & $\0$      &\6& $\{3,4\}$ & 4 & $\bot$ & $\bot$ & $\bot$ & $\bot$
& $\top$ \\
\hline
\end{tabular}
\medskip
\caption{Example run of the iPred algorithm on the lattice in Figure~\ref{fg:distlattice}}
\label{tb:secondrun}
\end{center}
\end{table}

\bibliographystyle{splncs03}
\bibliography{bibfile}

\end{document}